\newtheorem{theorem}{Theorem}
\title{How to Backdoor HyperNetwork in Personalized Federated Learning?}
\author{%
Phung Lai\\ % \thanks{Use footnote for providing further information about author (webpage, alternative address)---\emph{not} for acknowledging funding agencies.} \\
  % Department of Cybersecurity\\
 SUNY-Albany\\
  % Pittsburgh, PA 15213 \\
  \texttt{lai@albany.edu} \\
  % examples of more authors
  \And
  NhatHai Phan, Abdallah Khreishah \\
  New Jersey Institute of Technology \\
  % Address \\
  \texttt{hai@njit.edu, abdallah@njit.edu} \\
  \AND
  Issa Khalil \\
  Qatar Computing Research Institute \\
  % Address \\
  \texttt{ikhalil@hbku.edu.qa} \\
  \And
  Xintao Wu \\
  University of Arkansas \\
  % Address \\
  \texttt{xintaowu@uark.edu} \\
  % \And
  % Coauthor \\
  % Affiliation \\
  % Address \\
  % \texttt{email} \\
}
\begin{document}

\maketitle

\begin{abstract}
This paper explores previously unknown backdoor risks in HyperNet-based personalized federated learning (HyperNetFL) through poisoning attacks. Based upon that, we propose a novel model transferring attack (called \textsc{HNTroj}), i.e., the first of its kind, to transfer a local backdoor infected model to all legitimate and personalized local models, which are generated by the HyperNetFL model, through consistent and effective malicious local gradients computed across all compromised clients in the whole training process. As a result, \textsc{HNTroj} reduces the number of compromised clients needed to successfully launch the attack without any observable signs of sudden shifts or degradation regarding model utility on legitimate data samples making our attack stealthy. To defend against \textsc{HNTroj}, we adapted several backdoor-resistant FL training algorithms into HyperNetFL. An extensive experiment that is carried out using several benchmark datasets shows that \textsc{HNTroj} significantly outperforms data poisoning and model replacement attacks and bypasses robust training algorithms even with modest numbers of compromised clients. 
\end{abstract}

\section{Introduction}
Recent  data privacy regulations \cite{europe2018reg} pose significant challenges for machine learning (ML) applications that collect sensitive user data. % at servers controlled by the owners of these applications. 
Federated learning (FL) \cite{mcmahan2017communication} is a promising way to address these challenges by enabling clients to jointly train ML models via a coordinating server without sharing their data.
Although offering better data privacy, FL typically suffers from the disparity of model performance caused by the non-independent and identically distributed (non-iid) data distribution across clients \cite{zhu2021federated}. One of the state-of-the-art approaches to address this problem in FL is using a single joint neural network, called HyperNetFL, to generate local models using personalized descriptors optimized for each client independently \cite{shamsian2021personalized}. This allows us to perform smart gradient and parameter sharing.

Despite the superior performance, the unique training approach of HyperNetFL poses previously unknown risks to backdoor attacks typically carried out through poisoning in FL \cite{kairouz2019advances}. In backdoor attacks, an adversary manipulates the training process to cause model misclassification on a subset of chosen data samples \cite{bagdasaryan2020backdoor,bhagoji2019analyzing}. In FL, the adversary tries to construct malicious gradients or model updates that encode the backdoor. When aggregated with other clients’ updates, the aggregated model exhibits the backdoor.

We investigate backdoor attacks against HyperNetFL and formulate robust HyperNetFL training as defenses. 
Our developed attack (called \textsc{HNTroj}) is based on consistently and effectively crafting malicious local gradients across compromised clients using a single backdoor-infected model to enforce HyperNetFL generating local backdoor-infected models disregarding their personalized descriptors. 
An extensive analysis and evaluation  in non-iid settings show that \textsc{HNTroj} notably outperforms existing model replacement and data poisoning attacks bypassing recently developed robust federated training algorithms adapted to HyperNetFL with small numbers of compromised clients. Therefore, it is challenging to defend \textsc{HNTroj}.

% \textbf{Outline.} Section \ref{Background} presents background  for our paper. %We discuss about threat models and existing backdoor poisoning attacks adapted to HyperNetFL to motivate the development of \textsc{HNTroj} in Section \ref{Adapting Model Replacement and Data Poisoning Attacks}. 
% We then formally introduce \textsc{HNTroj} and its key advantages in Section \ref{Section Model Transferring Attacks}. %Section \ref{Robust HyperNetFL Training} sheds light into developing defenses and robust training algorithms against \textsc{HNTroj} based on various existing defenses.
% Finally, we present experimental results in benchmark datasets and provide insights on how  \textsc{HNTroj} is effective and stealthy bypassing robust training algorithms.

\section{Background}
\label{Background}

This section  reviews  HyperNetFL, backdoor,  and poisoning attacks. More information is in  Appx.~\ref{app:Background and Related Work}.

\textbf{HyperNetwork-based Personalized FL (HyperNetFL).} 
To address the disparity of model utility across clients,
HyperNetFL \cite{ha2016hypernetworks,shamsian2021personalized} uses a neural network $h(v_i, \varphi)$ located at the server to output the weights $\theta_i$ for each client $i$ using a (trainable) descriptor $v_i$ as input and model weights $\varphi$, that is, $\theta_i = h(v_i, \varphi)$. HyperNetFL offers a natural way to share information across clients through the weights $\varphi$ while maintaining the personalization of each client via the descriptor $v_i$. %In other words, HyperNetFL learns a family of personalized models $\{\theta_i = h(v_i, \varphi)\}_{i \in [N]}$.
To achieve this goal, the clients and the server will try to minimize their loss functions:
$
\arg\min_{\varphi, \{v_i\}_{i \in [N]}} \frac{1}{N} \sum_{i \in [N]} L_i(h(v_i, \varphi)) 
$.

\textbf{Backdoor and Poisoning Attacks.} 
Training time poisoning attacks against ML and FL models can be classified into byzantine and backdoor attacks. In byzantine attacks, the adversarial goal is to degrade or severely damage the model test accuracy \cite{Biggio:2012:PAA:3042573.3042761,Nelson:2008:EML:1387709.1387716,Steinhardt:2017:CDD:3294996.3295110}. Byzantine attacks are relatively detectable by tracking the model accuracy on validation data \cite{ozdayi2020defending}. Meanwhile, in backdoor attacks, the adversarial goal is to cause model misclassification on a set of chosen inputs without affecting model accuracy on legitimate data samples. A well-known way to carry out backdoor attacks is using Trojans \cite{gu2017badnets,liu2017trojaning}. A Trojan is a carefully crafted pattern, e.g., a brand logo, blank pixels, added into legitimate  samples causing the desired misclassification. A recently developed image warping-based Trojan mildly deforms an image by applying a geometric transformation \cite{nguyen2021wanet} to make it unnoticeable to humans and bypass all well-known Trojan detection methods, such as Neural Cleanse \cite{wang2019neural}, Fine-Pruning \cite{liu2018fine}, and STRIP \cite{gao2019strip}. The adversary applies the Trojan on legitimate data samples to activate the backdoor at the inference time.

%\vspace{-2pt}
The training data is scattered across clients in FL, and the server only observes local gradients. Therefore, backdoor attacks are typically carried by a small set of compromised clients fully controlled by an adversary to construct malicious local gradients and send them to the server. The adversary can apply data poisoning (\textsc{DPois}) and model replacement approaches to create malicious local gradients. In \textsc{DPois} \cite{suciu2018does,li2016data}, compromised clients train their local models on Trojaned datasets to construct malicious local gradients, such that the aggregated model at the server exhibits the backdoor. \textsc{DPois} may take many training rounds to implant the backdoor into the aggregated model. Meanwhile, in model replacement \cite{bagdasaryan2020backdoor}, the adversary constructs malicious local gradients, such that the aggregated model at the server will closely approximate or be replaced by a predefined Trojaned model. To some extent, model replacement is highly severe since it can be effective after only one training round. % \cite{fang2020local}.

% \xnote{double check the last sentence. Section 3 shows model replacement attacks are infeasible/impractical and less effective than DPOIS in HyperNetFL setting.} Fixed - hai

To our knowledge, \textit{these attacks are not primarily designed for HyperNetFL, in which there is no aggregated model $\theta$  at the server.} That poses an unknown risk of backdoors through poisoning attacks in HyperNetFL.

%\vspace{-2pt}

\section{Model Transferring Attack}   
\label{Section Model Transferring Attacks}

To overcome the lack of consistency in deriving the malicious local gradients in \textsc{DPois} and avoid sudden shifts in model utility on legitimate data samples in \textsc{HNRepl}, we propose in this work a novel \textit{model transferring attack} (\textsc{HNTroj}) against HyperNetFL.

In \textsc{HNTroj} (Alg.~\ref{Model Transferring Attacks}), our idea is to replace $\theta^c_t$ with a Trojaned model $X$ across all the compromised clients $c \in \complement$ and in all communication rounds $t$ to compute the malicious local gradients: $\forall c \in \complement, t \in [T]: \bigtriangleup \bar{\theta}^c_t = \psi^c_t \big[X - h(v_c, \varphi)\big]$, where $\psi^c_t$ is a dynamic learning rate randomly sampled following a specific distribution, e.g., uniform distribution $\mathcal{U}[a, b]$, $a < b$, and $a, b \in (0, 1]$. In practice, the adversary can collect its own data to locally train $X$. The collected data can be uniformly distributed across all classes to maximize the backdoor successful rate across all legitimate clients, whose data distribution is non-iid. This is because the server does not know the clients' local data distribution.

By doing so, we achieve several key advantages, as follows: 

%The pseudo-code of \textsc{HNTroj} is in Alg.~\ref{Model Transferring Attacks}.

\textbf{(1)} The gradients $\bigtriangleup \bar{\theta}^c_t$ become more effective in creating backdoors, since $X$ is a better optimized Trojaned model than $\{\theta^{c*}_t\}_{c \in \complement}$.

\textbf{(2)} The gradients across compromised clients synergistically approximate the Trojaned surrogate loss (Eq. \ref{Trojaned surrogate loss}, Appx.~\ref{Adapting Model Replacement and Data Poisoning Attacks}) to closely align the outputs of $h(\cdot, \varphi)$ to a unified Trojan model $X$ through the term $\frac{1}{2} \sum_{c \in \complement} \| X - h(v_c, \varphi) \|_2^2$ disregarding the varying descriptors $\{v_i\}_{i \in N}$ and their dissimilar local datasets. The new Trojaned surrogate loss is: $\frac{1}{2} (\sum_{c \in \complement} \| X - h(v_c, \varphi) \|_2^2 + \sum_{i \in N \setminus \complement} \| \theta^{i*} - h(v_i, \varphi) \|_2^2)$.

\begin{algorithm}[t]
\caption{Model Transferring Attack in HyperNetFL (\textsc{HNTroj})}\label{Model Transferring Attacks} 
\KwIn{Number of global rounds $T$   and local rounds $K$, learning rates $\lambda$, $\zeta$, and $\eta$,  number of clients $N$, dynamic learning rate $\psi \sim \mathcal{U}[a, b]$, and $L_i (B)$ is the loss function $L_i(\theta)$ on a mini-batch $B$}
\KwOut{ ${\varphi}, v_i$}
\begin{algorithmic}[1]
 \FOR{$t = 1, \ldots, T$}
 \STATE Sample clients $S_t$
 \FOR{each legitimate client $i \in S_t \setminus \complement$}
%  \STATE sample client $i \in [n]$
 \STATE set $\theta^i_t = h(v_i, \varphi)$ and $\tilde{\theta}^i = \theta^i_t$
    \FOR{$k = 1, \ldots, K$}
    % \uFor{$k = 1, \ldots, K$}
    % {
    \STATE sample mini-batch $B \subset D_i$
    \STATE $\tilde{\theta}_{k+1}^i = \tilde{\theta}_k^i - \eta \bigtriangledown_{\tilde{\theta}_{k}^i}  L_i (B)$
    % }
    \ENDFOR
    \STATE $\bigtriangleup \theta^i_t = \tilde{\theta}^i_K -\theta^i_t$
\ENDFOR
{\color{blue}\FOR{each compromised client $c \in S_t \cap \complement$}
 \STATE $\bigtriangleup \bar{\theta}^c_t = \big(\psi^c_t \sim \mathcal{U}[a, b]\big)\big[X - h(v_c, \varphi)\big]$
\ENDFOR}
    \STATE $\varphi = \varphi - \frac{\lambda}{|S_t|} \sum_{i=1}^{|S_t|} (\bigtriangledown_{\varphi} \theta^i_t)^\top \bigtriangleup \theta^i_t $
    \STATE $\forall i \in S_t: v_i = v_i - \zeta \bigtriangledown_{v_i} \varphi^\top (\bigtriangledown_{\varphi} \theta^i_t)^\top \bigtriangleup \theta^i_t$
 \ENDFOR
\end{algorithmic} 
\end{algorithm}
\setlength{\textfloatsep}{10pt}

% \vspace{-2pt}
\textbf{(3)} The gradients $\bigtriangleup \bar{\theta}^c_t$ become stealthier since updating the HyperNetFL with $\bigtriangleup \bar{\theta}^c_t$ will significantly improve the model utility on legitimate data samples. This is because $X$ has a better model utility on legitimate data samples than the local models of legitimate clients $\{\theta_t^{i*}\}_{i \in N \setminus \complement}$. More importantly, \textit{by keeping the random and dynamic learning rate $\psi^c_t$ only known to the compromised client $c$}, we can prevent the server from tracking  $X$ or identifying some suspicious behavior patterns from the compromised client.

%\vspace{-2pt}
We also quantify the server's estimation error  $Error$ as follows.
%in Appx.~\ref{app:Error Bounds Estimation}. 
% From Eq.~\ref{errorbound} (Appx.~\ref{app:Error Bounds Estimation}), w

Assuming that the server can identify compromised clients with a precision value $p$, we can quantify the server's estimation error bounds of the Trojaned model $X$, as follows.
The server's set of identified compromised clients consists of $p|\complement|$ compromised clients $\bar{\complement}$ and $(1-p) |\complement|$ legitimate clients $\bar{L}$. The estimated Trojaned model $X' = \sum_{c \in \bar{\complement} \cup \bar{L} } \theta_t^c / |\complement|$. Then, the estimation error of $X$ is computed and bounded as follows:

\begin{align}
 \nonumber    Error& = \| \sum_{c \in \bar{\complement}} \frac{\theta_t^c}{p|C|} + \sum_{i \in \bar{L} } \frac{\theta_t^i}{(1-p)(N-|C|)} - X \|_2\\
    &=  \| X' -X\|_2  = \| \sum_{c \in \bar{\complement} \cup \bar{L} } \frac{\theta_t^c}{|\complement|} - X \|_2
\end{align}
in which 
\begin{align}
  \| X' - X \|_2 &\ge \| \sum_{c \in \bar{\complement}  } \theta_t^c /(p |\complement|) - X \|_2 =  \| \sum_{c \in \bar{\complement} } \frac{\bigtriangleup \bar{\theta}^c_t }{p |\complement| \psi^c_t } \|_2   \ge  \| \sum_{c \in \bar{\complement} } \frac{\bigtriangleup \bar{\theta}^c_t }{p|\complement| b } \|_2 
    \label{lowerbound-cond}
\end{align}
    and
  \begin{equation}  
    \| \sum_{c \in \bar{\complement} \cup \bar{L} } \theta_t^c/|\complement| - X \|_2 
    \le \arg\max_{L \subseteq N  \text{ s.t. } |L|=|\complement| } \| \sum_{i \in L } \theta_t^i / |L| - X \|_2
    \label{upperbound-cond}
\end{equation}

From Eqs.~\ref{lowerbound-cond} and \ref{upperbound-cond}, we have the following error bounds: %\vspace{-3.25pt}
\begin{equation}
  \big\| \sum_{c \in \bar{\complement} } \frac{\bigtriangleup \bar{\theta}^c_t }{p|\complement| b } \big\|_2  \le  Error \le \arg\max_{L \subseteq N \text{  s.t. } |L|=|\complement| }  \big\| \sum_{i \in L } \frac{\theta_t^i}{|L|} - X \big\|_2
  \label{errorbound} 
%   \vspace{-2.5pt}
\end{equation}

We observe that: 1) The lower precision in detecting compromised clients (smaller $p$) results in a larger $Error$ approaching the upper bound; 2) The smaller $\psi^c_t$, the higher lower bound of  $Error$ is; and 3) If the gradient $\bigtriangleup \bar{\theta}^c_t$ is too small, we can uniformly upscale its $L_2$-norm to be a small constant (denoted $\tau$) to enlarge the lower bound of $Error$ without affecting the model utility or backdoor success rate. Fig. \ref{estimation error} illustrates the lower bound of $Error$ given the most favorable precision to the server ($p = 1$) with different values of  %numbers of compromised clients 
$\complement$. After 1,000 rounds, $Error$ is not further reduced since it is controlled by the lower bound (with $\tau = 2$ in this study). That prevents the server from accurately estimating $X$. % the Trojaned model $X$. 

\begin{figure}[t]
      \centering
       \includegraphics[scale=0.3]{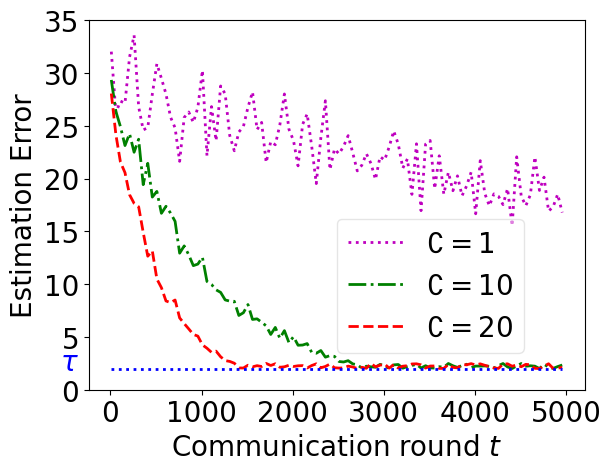} 
      \caption{Estimation error of \textsc{HNTroj} with  $p = 1$.} 
      \label{estimation error}
\end{figure}

\textbf{(4)} The following Theorem \ref{ConvergenceBound} shows that the $L_2$-norm distance between the local model $\theta^c_t$ of a compromised client generated by the HyperNetFL $h(v_c, \varphi)$ and  $X$, i.e., $\|\theta_t^c - X\|_2$, is bounded by $(\frac{1}{a} - 1) \| \bigtriangleup \bar{\theta}^c_{t'} \|_2 + \|\xi\|_2$, where $t'$ is the closest round the compromised client $c$ participated in before $t$, and $\xi \in \mathcal{R}^{m}$ is a small error rate. When the HyperNetFL model converges, e.g., $t', t \approxeq T$, $\|\xi\|_2$ become tiny and $\| \bigtriangleup \bar{\theta}^c_{t'} \|_2$ is bounded by a small constant $\tau$ ensuring that given the compromised client, $\theta_T^c = h(v_c, \varphi)$ converges into a bounded and low loss area surrounding  $X$ ($\|\theta_T^c - X\|_2$ is tiny) to imitate the normal training process. % the model convergence behavior of legitimate clients.

Consequently, \textsc{HNTroj} requires a smaller number of compromised clients to be highly effective. Also, \textsc{HNTroj} is stealthier than the (white-box) \textsc{HNRepl} and \textsc{DPois} by avoiding degradation and shifts in model utility on legitimate data samples during the whole poisoning process.

\begin{theorem}
For a compromised client $c$ participating in a round $t \in [T]$, we have that the $L_2$-norm distance between the HyperNetFL output $\theta_t^c = h(v_c, \varphi)$ and the Trojaned model $X$ is always bounded as follows:
\begin{equation}
\|\theta_t^c - X\|_2 \leq (1/a - 1) \| \bigtriangleup \bar{\theta}^c_{t'} \|_2 + \|\xi\|_2
\end{equation}
where $\forall t: \psi_t \sim \mathcal{U}[a, b]$, $a < b$, $a, b \in (0,1]$, $t'$ is the closest round the compromised client $c$ participated in, and $\xi \in \mathcal{R}^{m}$ is a small error rate.
\label{ConvergenceBound}
\end{theorem}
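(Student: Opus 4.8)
The plan is to exploit the closed form of the malicious gradient injected by \textsc{HNTroj} to tie the target distance $\|\theta_t^c - X\|_2$ directly to the magnitude of the last malicious gradient the client submitted. First I would fix the most recent round $t'\le t$ in which the compromised client $c$ was sampled and write its HyperNetFL output there as $\theta_{t'}^c = h(v_c,\varphi)$. By the construction of the attack (the line defining $\bigtriangleup\bar\theta^c_t$ in Alg.~\ref{Model Transferring Attacks}), the injected gradient equals $\bigtriangleup\bar\theta^c_{t'} = \psi^c_{t'}\big(X - \theta_{t'}^c\big)$ with $\psi^c_{t'}\sim\mathcal{U}[a,b]$ and $a,b\in(0,1]$. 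Since $\psi^c_{t'}>0$, this inverts to the key identity
\begin{equation}
\|\theta_{t'}^c - X\|_2 = \frac{1}{\psi^c_{t'}}\,\|\bigtriangleup\bar\theta^c_{t'}\|_2 .
\nonumber
\end{equation}

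Next I would characterise the effect of the server update on the compromised client's output. Reading the rule $\varphi \leftarrow \varphi - \frac{\lambda}{|S_t|}\sum_i (\bigtriangledown_\varphi\theta^i_t)^\top\bigtriangleup\theta^i_t$ as one gradient-descent step on the Trojaned surrogate loss (Eq.~\ref{Trojaned surrogate loss}), the contribution of client $c$ drives $h(v_c,\varphi)$ toward the implicit target $\theta^{c,\text{ideal}} = \theta_{t'}^c + \bigtriangleup\bar\theta^c_{t'} = (1-\psi^c_{t'})\,\theta_{t'}^c + \psi^c_{t'}\,X$, i.e.\ a convex combination of the current output and $X$ weighted by $\psi^c_{t'}$. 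Subtracting $X$ exposes the contraction
\begin{equation}
\theta^{c,\text{ideal}} - X = (1-\psi^c_{t'})\,(\theta_{t'}^c - X).
\nonumber
\end{equation}

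Finally I would absorb every deviation between this idealised target and the actual output observed at round $t$ into a residual $\xi := \theta_t^c - \theta^{c,\text{ideal}}$; note that $v_c$ is unchanged between $t'$ and $t$ because it is a personalized update, so $\xi$ stems only from the inexactness of a single gradient step, the averaging over $S_{t'}$, and the drift of $\varphi$ caused by legitimate clients in rounds where $c$ is not sampled. Writing $\theta_t^c - X = (1-\psi^c_{t'})(\theta_{t'}^c - X) + \xi$, the triangle inequality combined with the key identity gives
\begin{equation}
\|\theta_t^c - X\|_2 \le \Big(\frac{1}{\psi^c_{t'}} - 1\Big)\|\bigtriangleup\bar\theta^c_{t'}\|_2 + \|\xi\|_2 ,
\nonumber
\end{equation}
and since $a\le\psi^c_{t'}\le 1$ forces $\tfrac{1}{\psi^c_{t'}}-1\le\tfrac{1}{a}-1$ (a nonnegative quantity multiplying a nonnegative norm), I recover the claimed bound. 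The main obstacle is the middle step: the HyperNetFL update is not an exact projection onto $\theta^{c,\text{ideal}}$, so the entire argument hinges on the residual $\xi$ being genuinely small. I expect to justify this through the surrogate-loss interpretation, observing that as training converges both the per-client surrogate residual and the round-to-round movement of $\varphi$ vanish; this is precisely the regime $t',t\approxeq T$ in which the statement is invoked, so the bound tightens to a small neighbourhood of $X$ exactly when it is needed.
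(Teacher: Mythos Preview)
Your proposal is correct and follows essentially the same route as the paper. Both arguments rest on the identical decomposition $\theta_t^c - X = (1-\tfrac{1}{\psi^c_{t'}})\bigtriangleup\bar\theta^c_{t'} + \xi$ with $\xi := \theta_t^c - (\theta^c_{t'} + \bigtriangleup\bar\theta^c_{t'})$; you merely rewrite the first summand as $(1-\psi^c_{t'})(\theta_{t'}^c - X)$ via your ``key identity'' before taking norms, which is algebraically the same step the paper performs directly, and your discussion of why $\xi$ should be small is more explicit than the paper's one-line assumption that $\theta^c_t \approx \theta^c_{t'} + \bigtriangleup\bar\theta^c_{t'}$.
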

% Proof of Theorem \ref{ConvergenceBound} is in Appx.~\ref{app:ConvergenceBound}.

\begin{proof}
At the round $t'$, we have that $\bigtriangleup \bar{\theta}^c_{t'} = \psi^c_{t'}[X - \theta^c_{t'}]$. This is equivalent to $X = \frac{\bigtriangleup \bar{\theta}^c_{t'}}{\psi^c_{t'}} + \theta^c_{t'}$. At the round $t$, the HyperNetFL $h(v_c, \varphi)$ supposes to generate a better local model for the compromised client $c$: $\theta^c_t = \bigtriangleup \bar{\theta}^c_{t'} + \theta^c_{t'} + \xi$. To quantify the distance between the generated local model $\theta^c_t$ and the Trojaned model $X$, we subtract $\theta^c_t$ by $X$ as follows: $\theta^c_t - X = (1 - \frac{1}{\psi^c_{t'}})\bigtriangleup \bar{\theta}^c_{t'} + \xi$. Based upon this, we can bound the $l_2$-norm of the distance $\theta^c_t - X$ as follows:
\begin{align}
 \|\theta_t^c - X\|_2 & = \|(1 - \frac{1}{\psi^c_{t'}})\bigtriangleup \bar{\theta}^c_{t'} + \xi \|_2 \leq (\frac{1}{a} - 1) \| \bigtriangleup \bar{\theta}^c_{t'} \|_2 + \|\xi\|_2
\end{align}
Consequently, Theorem \ref{ConvergenceBound} holds.
\end{proof}

\section{Experimental Results}
\label{Experimental Results}
% We conduct extensive experiments to shed light on under-
% standing: 
 We focus on answering the following three questions in our evaluation: 
\textbf{(1)} Whether \textsc{HNTroj} is effective in HyperNetFL? \textbf{(2)} What is the percentage of compromised clients required for an effective attack? and \textbf{(3)} How difficult it is to defend against \textsc{HNTroj}? % given existing defenses? %, and what is the cost and limitations of such defenses?

\begin{figure*}[htbp]
 \centering
\subfloat[\#Compromised clients $\complement = 1$]{\label{Compare_Nodefense-1}\includegraphics[scale=0.3]{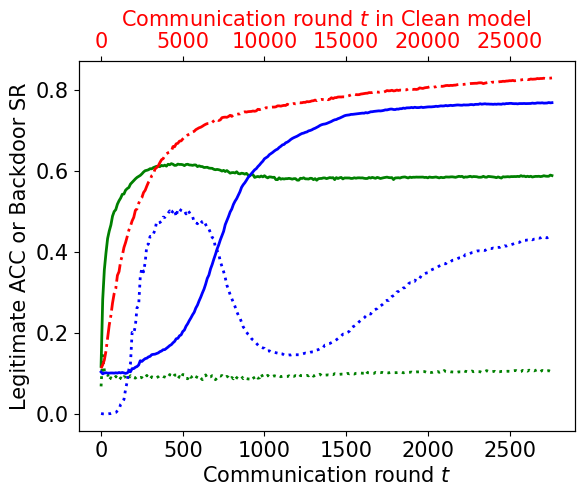}}\hfill
\subfloat[\#Compromised clients $\complement = 10$]{\label{Compare_Nodefense-10}\includegraphics[scale=0.3]{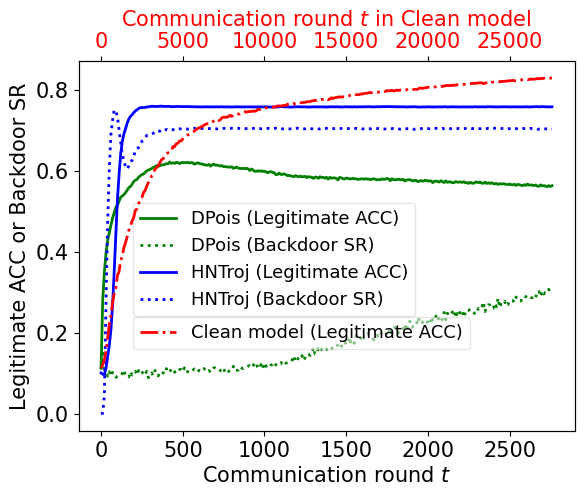}}\hfill
\subfloat[Summary]{\label{Compare_Nodefense-label}\includegraphics[scale=0.3]{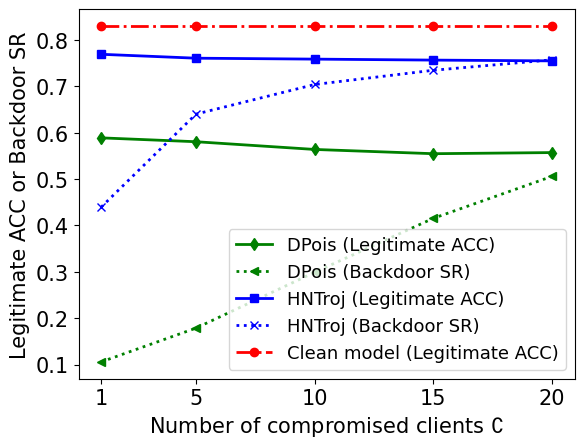}}
\caption{Legitimate ACC and Backdoor SR comparison for
\textsc{DPois}, \textsc{HNTroj}, and Clean model over different numbers of compromised clients in the CIFAR-10 dataset.
(Fig.~\ref{Compare_Nodefense}a has the same legend as in Fig.~\ref{Compare_Nodefense}b). } %\vspace{-15pt}
\label{Compare_Nodefense}
\end{figure*}

\textbf{Data and Model Configuration.} We conduct an extensive experiment on CIFAR-10 \cite{krizhevsky2009learning} and Fashion MNIST datasets \cite{xiao2017fashion}. For both datasets, we have $100$ clients in which the data is non-iid across clients and use the class $0$ as a targeted class $y^b_j$. 
%The distributions of classes and size of local training data are demonstrated in Figs.~\ref{histogram-client} and \ref{histogram}. 
We adopt the model configuration  described in \cite{shamsian2021personalized}. %, in which we use a LeNet-based network \cite{lecun1998gradient}.  
We use WaNet \cite{nguyen2021wanet}  for generating backdoor data to train $X$. 
%In the DP optimizer, we vary the noise scale  $\sigma \in \{10^{-1}, 10^{-2}, 10^{-3}, 10^{-4}\}$ and the clipping $l_2$-norm $\mu \in \{ 8,4,2,1\}$. For the $\alpha$-trimmed norm approach, we choose $\alpha \in \{0.1, 0.2, 0.3, 0.4\}$. 
More information is in Appx.~\ref{app:Experimental Settings}.

\textbf{Evaluation Metrics.} For evaluation, we use: % We use the following  metrics for evaluation: %  carry out the validationbased on two metrics 
%through three approaches. First, we compare \textsc{HNTroj} with \textsc{DPois} and \textsc{HNRepl} in terms of legitimate accuracy (ACC) on legitimate data samples and backdoor successful rate (SR) on Trojaned data samples with a wide range number of compromised clients. Second, we investigate the effectiveness of adapted robust HyperNetFL training algorithms, including the client-level DP optimizer and the $\alpha$-trimmed norm, under a variety of hyper-parameter settings against \textsc{HNTroj}. Third, we provide a performance summary of both attacks and defenses to inform the surface of backdoor risks in HyperNetFL. The (average) legitimate ACC and backdoor SR across clients on testing data are 
%as follows:
% \vspace{-2.5pt}
\begin{align}
& \text{Legitimate~ACC} = \frac{1}{N} \sum_{i \in [N]} \frac{1}{n_i^\tau} \sum_{j \in [n_i^\tau]} Acc \big(f(x_j^i,\theta^i), y_j^i \big) \nonumber \\
& \text{Backdoor~SR} =  \frac{1}{N} \sum_{i \in [N]} \frac{1}{n_i^\tau} \sum_{j \in [n_j^\tau]} Acc \big(f(\overline{x}_j^i, \theta^i), y_j^{i,b} \big)  \nonumber
\end{align}
where $\overline{x}^i_j = x^i_j + \mathcal{T}$ is a Trojaned sample, $Acc (y', y) = 1$ if $y' = y$; otherwise $Acc (y', y) = 0$ and $n_i^\tau$ is the number of testing samples in client $i$.

\textbf{\textsc{HNTroj} v.s. \textsc{DPois} and White-box \textsc{HNRepl}.} Figs.~\ref{Compare_Nodefense} and \ref{White-box model-dp} (Appx.~\ref{app:Experimental Settings})  present legitimate ACC and backdoor SR of each attack and the clean model (i.e., trained without poisoning) as a function of the communication round $t$ and the number of compromised clients $\complement$ under a defense free environment in the CIFAR-10 dataset. It is obvious that \textsc{HNTroj} significantly outperforms \textsc{DPois}. \textsc{HNTroj} requires a notably small number of compromised clients to successfully backdoor the HyperNetFL with high backdoor SR, i.e., $43.92\%$, $64.00\%$, $70.38\%$, $73.45\%$, and $75.70\%$ compared with $10.58\%$, $17.87\%$, $29.90\%$, $41.53\%$, and $50.57\%$ of the \textsc{DPois} given $1$, $5$, $10$, $15$, and $20$ compromised clients, without an undue cost in legitimate ACC, i.e., $76.89\%$. 

In addition, \textsc{HNTroj} does not introduce degradation or sudden shifts in legitimate ACC during the training process, regardless of the number of compromised clients, making it stealthier than \textsc{DPois} and (white-box) \textsc{HNRepl}. This is because we consistently poison the HyperNetFL  with a relatively good  model $X$, which achieves $74.26\%$ legitimate ACC and $85.92\%$ backdoor SR, addressing the inconsistency in deriving the malicious local gradients. %Note that 
There is a small legitimate ACC gap between \textsc{HNTroj} and the clean model, i.e., $6.11\%$ in average. However, this gap will not be noticed by the server since the clean model is invisible to the server when the compromised clients are present.

% \begin{figure}[t]
%   \centering
% \subfloat[CIFAR-10]{\label{outlier_removel}\includegraphics[scale=0.332]{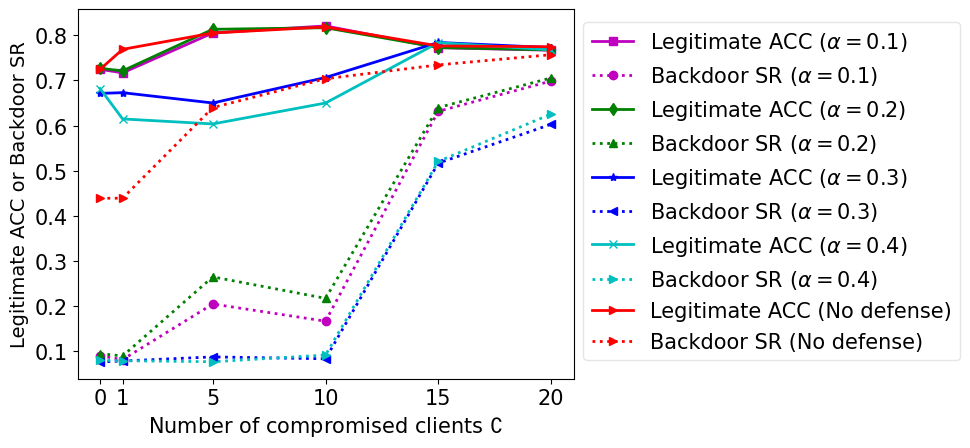}}\hfill
% \subfloat[Fashion MNIST]{\label{outlier_removel-fmnist}\includegraphics[scale=0.332]{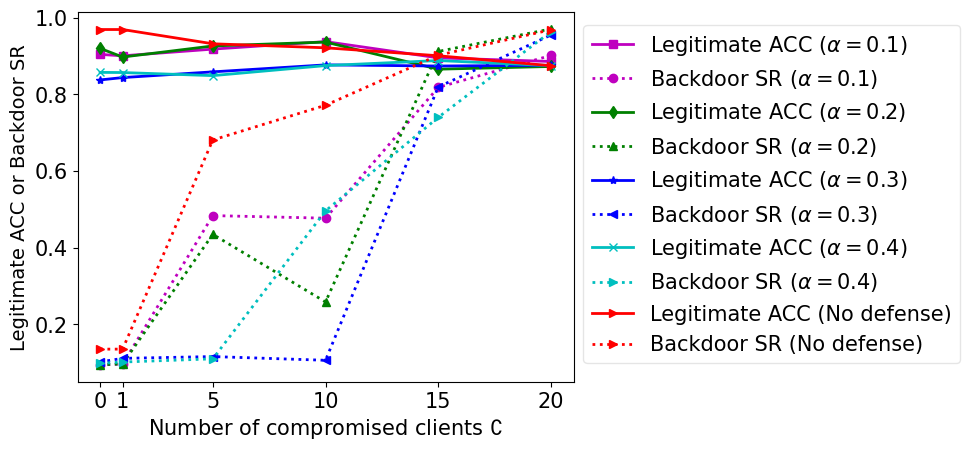}} 
%       \caption{Legitimate ACC and backdoor SR under $\alpha$-trimmed norm defense in the CIFAR-10 and Fashion MNIST datasets.}  %\vspace{-10pt}}
%       \label{risk surface}
% \end{figure}

 \begin{figure}[t]
      \centering
       \includegraphics[scale=0.3]{} %\vspace{-5pt}
      \caption{Legitimate ACC and backdoor SR under $\alpha$-trimmed norm defense in the CIFAR-10 dataset.} 
      \label{risk surface}
\end{figure}

\textbf{\textsc{HNTroj} v.s. $\alpha$-Trimmed Norm.} Since \textsc{HNTroj} outperforms other poisoning attacks, we now focus on understanding its performance under robust HyperNetFL training. Fig.~\ref{risk surface} shows the performance of $\alpha$-trimmed norm against \textsc{HNTroj} as a function of the number of compromised clients $\complement$. There are three key observations from the results, as follows: \textbf{(1)} Applying $\alpha$-trimmed norm does reduce the backdoor SR, especially when the number of compromised clients is small, i.e., backdoor SR drops $39.72\%$ in average given $\complement \in [1, 10]$. However, when the number of compromised clients is a little bit larger, the backdoor SR is still at highly severe levels, i.e., $51.70\% \sim 70.55\%$  given $15$ to $20$ compromised clients, regardless of a wide range of trimming level $\alpha \in [0.1, 0.4]$; \textbf{(2)} The larger the %trimming level 
$\alpha$ is, the lower the backdoor SR tends to be. This good result comes with a toll on the legitimate ACC, which is notably reduced when $\alpha$ is larger. In average, the legitimate ACC drops from $79.75\%$ to $67.66\%$ and $62.29\%$ given $\alpha \in [0.3, 0.4]$ and $\complement \in [1, 10]$, respectively. That clearly highlights a non-trivial trade-off between legitimate ACC and backdoor SR given attacks and defenses; and \textbf{(3)} The more compromised clients we have, the better the legitimate ACC is when the trimming level $\alpha$ is large, i.e., $\alpha \in [0.3, 0.4]$. That is because training with the Trojaned model $X$, which has a relatively good legitimate ACC, can mitigate the damage of large trimming levels on the legitimate ACC. In fact, a large number of compromised clients implies a better probability for the compromised clients to sneak through the trimming; thus, improving both legitimate ACC and backdoor SR.

We observe a similar phenomenon when we apply the client-level DP optimizer as a defense against \textsc{HNTroj} (Fig.~\ref{DP-opt CIFAR-10 - main}). More information is in  Appx.~\ref{app:Experimental Settings}. 

% We also investigate the effectiveness of adapted robust HyperNetFL training algorithm with Client-level DP optimizer in Appx.~\ref{app:Experimental Settings} and obtain similar phenomenon

\textbf{Backdoor SR at Client Level.} Importantly, the histogram of backdoor SR in the CIFAR-10 dataset under DP optimizer (Fig.~\ref{ASR at client}a) shows that \textsc{HNTroj} with only 1 compromised clients can open backdoors to 10 (a decent number of) legitimate clients with high backdoor SRs ($> 0.4$). We observe similar backdoor SRs to 22 legitimate clients with only 1 compromised client in FMNIST (Fig.~\ref{ASR at client}b). Therefore, it is not easy to defend against \textsc{HNTroj}.

\begin{figure}[t]
  \centering
\subfloat[CIFAR-10]{\label{risk_surface-dp}\includegraphics[scale=0.24]{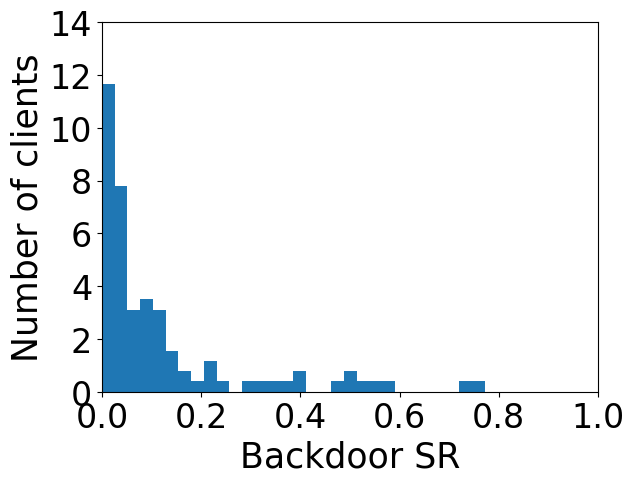}}\hspace{2cm}
% \subfloat[Fashion MNIST]{\label{risk_surface-trim}\includegraphics[scale=0.24]{}} %\vspace{-5pt}}
      \caption{Backdoor Success Rate at the Client Level.}  
      \label{ASR at client}
\end{figure}

\textbf{Results on the Fashion MNIST dataset.} The results on the Fashion MNIST dataset further strengthen our observation. \textsc{DPois} even failed to implant backdoors into HyperNetFL (Figs.~\ref{White-box model}b and \ref{Compare_Nodefense-fmnist-main}, Appx.~\ref{app:Experimental Settings}). This is because the HyperNetFL model converges $10\mathbf{x}$ faster than the model for the CIFAR-10 dataset, i.e., given the simplicity of the Fashion MNIST dataset; thus, significantly reducing the poisoning probability through participating in the training of a small set of compromised clients. %Thanks to the consistency in deriving malicious local gradients, \textsc{HNTroj} is still highly effective under the defense free environment. 
We found that the client-level DP optimizer appears having the potential to mitigate \textsc{HNTroj} due to the model's fast convergence. However, the backdoor SR at the client level shows that with only 1 compromised client, \textsc{HNTroj} can open backdoors to 22 (over 100) legitimate clients with SR $>0.4$ (Fig. \ref{ASR at client}). Therefore, it is challenging for the client-level DP optimizer to defend \textsc{HNTroj}. Also, the $\alpha$-trimmed norm is still failed to defend again \textsc{HNTroj} (Figs.~\ref{outlier_removel-fmnist}, \ref{DP-opt fmnist - main}, and \ref{risk surface fmnist}, Appx.~\ref{app:Experimental Settings}).

% \vspace{-5pt}
\section{Conclusion and Future Work} 
\label{Conclusion}

We presented a black-box model transferring attack (\textsc{HNTroj}) to implant backdoor into HyperNetFL. We overcome the lack of consistency in deriving malicious local gradients to efficiently transfer a Trojaned model to the outputs of the HyperNetFL. We multiply a random and dynamic learning rate to the malicious local gradients making the attack stealthy. To defend against \textsc{HNTroj}, we adapted several robust FL  algorithms into HyperNetFL. Extensive experiment results show that \textsc{HNTroj} outperforms black-box \textsc{DPois} and white-box \textsc{HNRepl} bypassing adapted robust training algorithms with small numbers of compromised clients. Future work is to 1) adapt \textsc{HNTroj} on other personalized FL frameworks and 2) use multiple Trojaned models adapting to diverse compromised clients.

% An immediate future work is to investigate the effectiveness of \textsc{HNTroj} on other personalized FL frameworks. \textsc{HNTroj} is easily generalizable as long as we can guarantee the consistency in deriving the malicious gradients across compromised clients. The Trojaned model $X$ will be transferred to legitimate clients through the aggregation function and shared global models in other frameworks, by minimizing the Trojaned surrogate loss: $\frac{1}{2} (\sum_{c \in \complement} \| X - \theta^g \|_2^2 + \sum_{i \in N \setminus \complement} \| \theta^{i*} - \theta^g \|_2^2)$, where $\theta^g$ is the global aggregated model. 
% Another line of future research is to use multiple Trojaned models adapting to diverse compromised clients. That may 1) improve the power of personalized descriptors given legitimate clients, 2) improve the backdoor SRs, and 3) prevent potential detection methods that leverage the personalized descriptors.

% Acknowledgements should only appear in the accepted version.
% \section*{Acknowledgements}

% In the unusual situation where you want a paper to appear in the
% references without citing it in the main text, use \nocite
% \nocite{langley00}

\bibliography{example_paper}
\bibliographystyle{IEEEbib}

%%%%%%%%%%%%%%%%%%%%%%%%%%%%%%%%%%%%%%%%%%%%%%%%%%%%%%%%%%%%%%%%%%%%%%%%%%%%%%%
%%%%%%%%%%%%%%%%%%%%%%%%%%%%%%%%%%%%%%%%%%%%%%%%%%%%%%%%%%%%%%%%%%%%%%%%%%%%%%%
% APPENDIX
%%%%%%%%%%%%%%%%%%%%%%%%%%%%%%%%%%%%%%%%%%%%%%%%%%%%%%%%%%%%%%%%%%%%%%%%%%%%%%%
%%%%%%%%%%%%%%%%%%%%%%%%%%%%%%%%%%%%%%%%%%%%%%%%%%%%%%%%%%%%%%%%%%%%%%%%%%%%%%%
% \newpage
\appendix

\section{Background and Related Work}
\label{app:Background and Related Work}

\subsection{Federated Learning (FL)}
We consider the following FL protocol: at round $t$, the server sends the latest model weights $\theta_t$ to a randomly sampled subset of clients $S_t$. Upon receiving $\theta_t$, the client $i$ in $S_t$ uses $\theta_t$ to train their local model for some number of iterations, e.g., via stochastic
gradient descent (SGD), and results in model weights $\theta^i_t$. The client $i$ computes their local gradient $\bigtriangleup \theta^i_t = \theta^i_t - \theta_t$, and sends it back to the server. After receiving all the local gradients from all the clients in $S_t$, the server updates the model weights by aggregating all the local gradients by using an aggregation function $\mathcal{G}: R^{|S_t|\times m} \rightarrow R^m$ where $m$ is the size of $\bigtriangleup \theta^i_t$. The aggregated gradient will be added to $\theta_t$, that is, $\theta_{t+1} = \theta_t + \lambda \mathcal{G}(\{\bigtriangleup\theta^i_t\}_{i\in S_t})$ where $\lambda$ is the server’s learning rate. A typical aggregation function is Federated Averaging (FedAvg) applied in many papers in FL \cite{kairouz2019advances}, as follows: 
\begin{equation}
\theta_{t+1} = \theta_t + \lambda (\sum_{i \in S_t}n_i \times \bigtriangleup \theta_{t}^i) / \sum_{i \in S_t} n_i 
\label{FedAvg}
\end{equation}
When the number of training samples $n_i$ is hidden from the server, one can use an unweighted aggregation function: $\theta_{t+1} = \theta_t + \lambda \sum_{i \in S_t}\bigtriangleup \theta_{t}^i / |S_t|$.

\subsection{Training protocol of a HyperNetFL }
The training protocol of a HyperNetFL (Alg.~\ref{HyperNetFL}) is generally similar to the aforementioned protocol of typical FL. However, at round $t$, there are three differences in training HyperNetFL compared with FedAvg \cite{kairouz2019advances}: \textbf{(1)} There is no global model $\theta$ generated by the aggregation function $\mathcal{G}$ in Eq. \ref{FedAvg}; \textbf{(2)} Each client $i \in S_t$ receives the personalized weights $\theta^i_t$ from the HyperNetFL. The client $i$ computes the local gradient $\bigtriangleup \theta^i_t$ and then sends it to the server; \textbf{(3)} The server uses all the local gradients received from all the clients $i \in S_t$ to update $\varphi$ and the descriptors $v_i$ using general update rules in Lines 9 and 10 (Alg. \ref{HyperNetFL}); and \textbf{(4)} The size of the HyperNetFL weights $\varphi$ is significanlty larger than $\theta$ ($\varphi \simeq 10\mathbf{x}$ size of $\theta$) causing extra computational cost at the server. This protocol is more general than using only one client at a communication round as in \cite{shamsian2021personalized}. By using a small batch of clients per communication round, we observe that we can enhance the performance of the HyperNetFL and make it more reliable.

\begin{algorithm}[t]
\caption{HyperNetFL with Multiple Clients per Round } \label{HyperNetFL}
\KwIn{Number of rounds $T$, number of local rounds $K$, server's learning rates $\lambda$ and $\zeta$, clients' learning rate $\eta$, number of clients $N$, and $L_i (B)$ is the loss function $L_i(\theta)$ on a mini-batch $B$}
\KwOut{ ${\varphi}, v_i$}
\begin{algorithmic}[1]
 \FOR{$t = 1, \ldots, T$}
 \STATE Sample clients $S_t$
 \FOR{each client $i \in S_t$}
%  \STATE sample client $i \in [n]$
 \STATE set $\theta^i_t = h(v_i, \varphi)$ and $\tilde{\theta}^i = \theta^i_t$
    \FOR{$k = 1, \ldots, K$}
    \STATE sample mini-batch $B \subset D_i$
    \STATE $\tilde{\theta}_{k+1}^i = \tilde{\theta}_k^i - \eta \bigtriangledown_{\tilde{\theta}_{k}^i}  L_i (B)$
    \ENDFOR
    \STATE $\bigtriangleup \theta^i_t = \tilde{\theta}^i_K -\theta^i_t$
\ENDFOR
    \STATE $\varphi = \varphi - \frac{\lambda}{|S_t|} \sum_{i=1}^{|S_t|} (\bigtriangledown_{\varphi} \theta^i_t)^\top \bigtriangleup \theta^i_t $
    \STATE $\forall i \in S_t: v_i = v_i - \zeta \bigtriangledown_{v_i} \varphi^\top (\bigtriangledown_{\varphi} \theta^i_t)^\top \bigtriangleup \theta^i_t$
 \ENDFOR
\end{algorithmic} 
\end{algorithm} 
\setlength{\textfloatsep}{5pt}

\section{Data Poisoning in HyperNetFL} 
\label{Adapting Model Replacement and Data Poisoning Attacks}

We consider both white-box and black-box model replacement threat models.
Although unrealistic, the white-box setting provides the upper bound risk. Meanwhile, the black-box setting will inform a realistic risk in practice. Details regarding the white-box setting and the adaptation of model replacement attacks into HyperNetFL, called \textbf{\textsc{HNRepl}}, are available\footnote{https://www.dropbox.com/s/cn3omnqx7sso6my/HNTroj-2.pdf?dl=0}. % Appx.~\ref{Model Replacement - HyperNetFL}.
Let us present our black-box threat model and attacks as follows.

\textbf{Black-box Threat Model.} At round $t$, an adversary fully controls a small set of compromised clients $\complement$. The adversary cannot modify the training protocol of the HyperNetFL at the server and at the legitimate clients. The adversary's goal is to implant backdoors in all local models $\{\theta^i = h(v_i, \varphi)\}_{i \in [N]}$ by minimizing a backdoor  objective: 
\begin{equation}
\arg\min_{\varphi, \{v_i\}_{i \in [N]}} \frac{1}{N} \sum_{i = 1}^N \big[L_i(h(v_i, \varphi)) + L^b_i(h(v_i, \varphi)) \big] 
\label{backdoor goal}
\end{equation}
where $L^b_i$ is the (backdoor) loss function of the $i^{th}$ client given Trojaned examples $x_j + \mathcal{T}$ with the trigger $\mathcal{T}$ \cite{nguyen2021wanet}, e.g., $L^b_i = \frac{1}{n_i}\sum_{j = 1}^{n_i} L\big(f(x_j + \mathcal{T}, \theta), y^b_j\big)$ where $y^b_j$ is the targeted label for the sample $x_j + \mathcal{T}$. One can vary the portion of Trojaned samples to optimize the attack performance.
This black-box threat model is applied throughout this paper. 
% \xnote{better to add the reference. It seems image warping-based WaNet \cite{nguyen2021wanet}.}

We found that \textsc{HNRepl} is infeasible in the black-box setting, since the weights $\varphi$ and the descriptors $\{v_i\}_{i \in [N]}$ are hidden from all the clients. Also, there is lack of effective approach to infer (large) $\varphi$ and $\{v_i\}_{i \in [N]}$ given a small number of compromised clients.

\textbf{Data Poisoning (\textsc{DPois}) in HyperNetFL.} To address the issues of \textsc{HNRepl}, we look into another fundamental approach, that is applying black-box \textsc{DPois}. The pseudo-code of the attack is in Alg.~\ref{Data Poisoning Attacks}. At round $t$, the compromised clients $c \in \complement \cap S_t$ receive the personalized model weights $\theta^c_t$ from the server. Then, they compute malicious local gradients $\bigtriangleup \bar{\theta}^c_t$ using their Trojan datasets, i.e., their legitimate data combined with Trojaned data samples, to minimize their local backdoor loss functions: $\forall c \in \complement \cap S_t: \theta^{c*}_t =  \arg\min_{\theta^c_t} [L_c(h(v_c, \varphi)) + L^b_c(h(v_c, \varphi))]$, after a certain number of local steps $K$ of SGD. All the malicious local gradients $\{\bigtriangleup \bar{\theta}^c_t\}_{c \in \complement}$ are sent to the server. If the HyperNetFL updates the model weights $\varphi$ and the descriptors $\{v_i\}_{i \in S_t}$ using $\{\bigtriangleup \bar{\theta}^c_t\}_{c \in \complement}$, the local model weights generated by the HyperNetFL $h(\cdot,\varphi)$ will be Trojan infected. This is because the update rules of the HyperNetFL become the gradient of an approximation to the following Trojaned surrogate loss
\begin{equation} 
\frac{1}{2} (\sum_{c \in \complement} \| \theta^{c*} - h(v_c, \varphi) \|_2^2 + \sum_{i \in N \setminus \complement} \| \theta^{i*} - h(v_i, \varphi) \|_2^2)
\label{Trojaned surrogate loss}
\end{equation}
where $\theta^{c*}$ is the optimal local Trojaned model weights, $i \in N \setminus \complement$ are legitimate clients and their associated legitimate loss functions $\theta^{i*} =  \arg\min_{\theta^i} L_i(h(v_i, \varphi)$.

\begin{algorithm}[t]
\caption{Backdoor Data Poisoning in HyperNetFL (\textsc{DPois})}\label{Data Poisoning Attacks}
\KwIn{Number of rounds $T$, number of local rounds $K$, server's learning rates $\lambda$ and $\zeta$, clients' learning rate $\eta$, number of clients $N$, and $L_c(B)$ and $L^b_c(B)$ are the loss functions (legitimate) $L_c(\theta)$ and (backdoor) $L^b_c(\theta)$ on a mini-batch $B$}
\KwOut{ ${\varphi}, v_i$}
\begin{algorithmic}[1]
 \FOR{$t = 1, \ldots, T$}
 \STATE Sample clients $S_t$
 \FOR{each legitimate client $i \in S_t \setminus \complement$}
%  \STATE sample client $i \in [n]$
 \STATE set $\theta^i_t = h(v_i, \varphi)$ and $\tilde{\theta}^i = \theta^i_t$
    \FOR{$k = 1, \ldots, K$}
    \STATE sample mini-batch $B \subset D_i$
    \STATE $\tilde{\theta}_{k+1}^i = \tilde{\theta}_k^i - \eta \bigtriangledown_{\tilde{\theta}_{k}^i}  L_i (B)$
    \ENDFOR
    \STATE $\bigtriangleup \theta^i_t = \tilde{\theta}^i_K -\theta^i_t$
\ENDFOR
{\color{blue}\FOR{each compromised client $c \in S_t \cap \complement$}
%  \STATE sample client $i \in [n]$
 \STATE set $\theta^c_t = h(v_c, \varphi)$ and $\tilde{\theta}^c = \theta^c_t$
    \FOR{$k = 1, \ldots, K$}
    \STATE sample mini-batch $B \subset D_c$
    \STATE $\tilde{\theta}_{k+1}^c = \tilde{\theta}_k^c - \eta \bigtriangledown_{\tilde{\theta}_{k}^c} [L_c(B) + L^b_c(B)]$
    \ENDFOR
    \STATE $\bigtriangleup \bar{\theta}^c_t = \tilde{\theta}^c_K -\theta^c_t$
\ENDFOR}
    \STATE $\varphi = \varphi - \frac{\lambda}{|S_t|} \sum_{i=1}^{|S_t|} (\bigtriangledown_{\varphi} \theta^i_t)^\top \bigtriangleup \theta^i_t $
    \STATE $\forall i \in S_t: v_i = v_i - \zeta \bigtriangledown_{v_i} \varphi^\top (\bigtriangledown_{\varphi} \theta^i_t)^\top \bigtriangleup \theta^i_t$
 \ENDFOR
\end{algorithmic} 
\end{algorithm}
\setlength{\textfloatsep}{7.5pt}

\textbf{Disadvantages of \textsc{DPois}.} Obviously, the larger the number of compromised clients is, i.e., a larger $\complement$ and a smaller $N \setminus \complement$, the more effective the attack will be. Although more practical than the \textsc{HNRepl} in poisoning HyperNetFL, there are two issues in the black-box \textsc{DPois}: \textbf{(1)} The attack causes notable degradation in model utility on the legitimate data samples; and \textbf{(2)} The attack requires a more significant number of compromised clients to be successful. These disadvantages reduce the stealthiness and effectiveness of the attack, respectively.

The root cause issue of the \textsc{DPois} is the lack of consistency in deriving the malicious local gradient $\bigtriangleup \bar{\theta}^c_t = \theta^{c*}_t - h(v_c, \varphi)$ across communication rounds and among compromised clients to outweigh the local gradients from legitimate clients. First, $\theta^{c*}_t$ is derived after (a small number) $K$ local steps of applying SGD to minimize the local backdoor loss function $\theta^{c*}_t =  \arg\min_{\theta^c_t} [L_c(h(v_c, \varphi)) + L^b_c(h(v_c, \varphi))]$, in which the local model weights $h(v_c, \varphi)$ and the loss functions (i.e, $L_c$ and $L_c^b$) are varying among compromised clients due to the descriptors $\{v_c\}_{c \in \complement}$ in addition to the their dissimilar local datasets. As a result, the (supposed to be) Trojaned model weights $\{\theta^{c*}_t\}_{c \in \complement}$ are unalike among compromised clients. Second, a small number of local training steps $K$ (i.e., given a limited computational power on the compromised clients) is not sufficient to approximate a good Trojaned model $\theta^{c*}_t$. The adversary can increase the local training steps $K$ if more computational power is available. However, there is still no guarantee that $\{\theta^{c*}_t\}_{c \in \complement}$ will be alike without the control over the dissimilar descriptors $\{v_c\}_{c \in \complement}$. Third, the local model weights $h(v_c, \varphi)$ change after every communication round and are heavily affected by the local gradients from legitimate clients. Consequently, the malicious local gradients $\bigtriangleup \bar{\theta}^c_t$ derived across all the compromised clients $c \in \complement$ do not synergistically optimize the approximation to the Trojaned surrogate loss function (Eq. \ref{Trojaned surrogate loss}) such that the outputs of the HyperNetFL $h(\cdot, \varphi)$ are Trojan infected.
% \xnote{It seems this whole paragraph more belongs to Section 3.2.}

Therefore, developing a practical, stealthy, and effective backdoor attack in HyperNetFL is non-trivial and remains an open problem.

\section{Robust HyperNetFL Training}
\label{Robust HyperNetFL Training}

In this section, we first investigate the state-of-the-art defenses against backdoor poisoning in FL and point out the differences between FL and HyperNetFL. We then present our robust training approaches adapted from existing defenses for HyperNetFL against \textsc{HNTroj}.

Existing defense approaches against backdoor poisoning in ML can be categorised into two lines: 1) Trojan detection in the inference phase  and 2) robust aggregation to mitigate the impacts of malicious local gradients in aggregation functions. In this paper, we applied the state-of-the-art warping-based Trojans bypassing all the well-known Trojan detection methods, i.e., Neural Cleanse \cite{wang2019neural}, Fine-Pruning \cite{wang2019neural}, and STRIP \cite{gao2019strip}, in the inference phase. \textsc{HNTroj} does not affect the warping-based Trojans (Fig. \ref{visualization-Wanet}); thus bypassing these detection methods. Based upon that, we focus on identifying which robust aggregation approaches can be adapted to HyperNetFL and how.

\textbf{Robust Aggregation} Several works have proposed robust aggregation approaches to deter byzantine attacks in typical FL, such as coordinate-wise median, geometric median, $\alpha$-trimmed mean, or a variant and 
combination of such techniques \cite{yin2018byzantine}. Recent approaches include weight-clipping and noise
addition with certified bounds, ensemble models, differential privacy (DP) optimizers, and adaptive and robust learning rates (RLR) across clients and at the server \cite{hong2020effectiveness,ozdayi2020defending}.

Despite differences, existing robust aggregation focuses on analysing and manipulating the local gradients $\bigtriangleup \theta_{t}^i$, which share the global aggregated model $\theta_t$ as the same root, i.e., $\forall i \in [N]: \bigtriangleup \theta^i_t = \theta^i_t - \theta_t$. The fundamental assumption in these approaches is that the local gradients from compromised clients $\{\bigtriangleup \bar{\theta}_{t}^c\}_{c \in \complement}$ and  legitimate clients $\{\bigtriangleup \theta_{t}^i\}_{i \in N \setminus \complement}$ are different in terms of magnitude and direction.

\begin{algorithm}[t]
\caption{Client-level DP Optimizer in HyperNetFL}\label{Client-level DP Optimizer}
\KwIn{Number of rounds $T$, number of local rounds $K$, server's learning rates $\lambda$ and $\zeta$, clients' learning rate $\eta$, number of clients $N$, clipping bound $\mu$, noise scale $\sigma$, and $L_i (B)$ is the loss function $L_i(\theta)$ on a mini-batch $B$}
\KwOut{ ${\varphi}, v_i$}
\begin{algorithmic}[1]
 \FOR{$t = 1, \ldots, T$}
 \STATE Sample clients $S_t$
 \FOR{each client $i \in S_t$}
%  \STATE sample client $i \in [n]$
 \STATE set $\theta^i_t = h(v_i, \varphi)$ and $\tilde{\theta}^i = \theta^i_t$
    \FOR{$k = 1, \ldots, K$}
    \STATE sample mini-batch $B \subset D_i$
    \STATE $\tilde{\theta}_{k+1}^i = \tilde{\theta}_k^i - \eta \bigtriangledown_{\tilde{\theta}_{k}^i}  L_i (B)$
    \ENDFOR
    \STATE $\bigtriangleup \theta^i_t = \tilde{\theta}^i_K -\theta^i_t$
\ENDFOR
{\color{blue}    
    \STATE $\varphi = \varphi - \frac{\lambda}{|S_t|} \big[\sum_{i=1}^{|S_t|} \frac{(\bigtriangledown_{\varphi} \theta^i_t)^\top \bigtriangleup \theta^i_t}{\max(1, \frac{\|(\bigtriangledown_{\varphi} \theta^i_t)^\top \bigtriangleup \theta^i_t\|_2}{\mu})} + \mathcal{N}(0, \sigma^2\mu^2\mathbf{I}) \big]$
    \STATE $\forall i \in S_t: v_i = v_i - \zeta \bigtriangledown_{v_i} \varphi^\top \frac{(\bigtriangledown_{\varphi} \theta^i_t)^\top \bigtriangleup \theta^i_t}{\max(1, \frac{\|(\bigtriangledown_{\varphi} \theta^i_t)^\top \bigtriangleup \theta^i_t\|_2}{\mu})}$
}
 \ENDFOR
\end{algorithmic} 
\end{algorithm}

\textbf{Robust FL Training v.s. HyperNetFL.} Departing from typical FL, the local gradients in HyperNetFL have different and personalized roots $h(v_i, \varphi)$, i.e., $\forall i \in [N]: \bigtriangleup \theta^{i}_t = \theta^{i*}_t - h(v_i, \varphi)$. Therefore, the local gradients $\{\bigtriangleup \theta^{i}_t\}_{i \in N}$ in HyperNetFL may diverge in magnitude and direction in their own sub-optimal spaces, making it challenging to adapt existing robust aggregation methods into HyperNetFL. More importantly, manipulating the local gradients alone can significantly affect the original update rules of the HyperNetFL, which are derived based on the combination between the local gradients and the derivatives of $\varphi$ and $v_i$ given the output of $h(\cdot, \varphi)$, i.e., $(\bigtriangledown_{\varphi} \theta^i_t)^\top$ and $\bigtriangledown_{v_i} \varphi^\top (\bigtriangledown_{\varphi} \theta^i_t)^\top$, respectively.

For instance, adapting the recently developed RLR \cite{ozdayi2020defending} on the local gradients can degrade the model utility on legitimate data samples to a random guess level on several benchmark datasets$^1$. In addition, the significantly large size of $\varphi$ introduces an expensive computational cost in adapting (statistics-based) robust aggregation approaches into HyperNetFL against \textsc{HNTroj}.

Regarding certified bounds against backdoors at the inference phase derived in weight-clipping and noise addition \cite{xie2021crfl} cannot be straightforwardly adapted to HyperNetFL, since the bounds are especially designed for the aggregated model $\theta$ (e.g., Eq. \ref{FedAvg}). There is no such aggregated model in HyperNetFL. Similarly, the ensemble model training-based robustness bounds \cite{jia2020intrinsic} cannot be directly adapted to HyperNetFL, since there is no ensemble model training in HyperNetFL.

\textbf{Robust HyperNetFL Training.} Based on our observation, to avoid damaging the update rule of HyperNetFL, a suitable way to develop robust HyperNetFL training algorithms is to adapt existing robust aggregation on the set of $\varphi$'s gradients $\{(\bigtriangledown_{\varphi} \theta^i_t)^\top \bigtriangleup \theta^i_t\}_{i \in S_t}$ given $\varphi = \varphi - \frac{\lambda}{|S_t|} \sum_{i=1}^{|S_t|} (\bigtriangledown_{\varphi} \theta^i_t)^\top \bigtriangleup \theta^i_t$. It is worth noting that we may not need to modify the update rule of the descriptors $\{v_i\}_{i \in S_t}$ since $v_i$ is a personalized update that does not affect the updates of any other descriptors $\{v_j\}_{j \neq i}$ and the model weights $\varphi$.

\paragraph{Client-level DP Optimizer.} Among robust training against backdoor poisoning attacks, differential privacy (DP) optimizers, weight-clipping and noise addition can be adapted to defend against \textsc{HNTroj}. Since they share the same spirit, that is, clipping gradients from all the clients before adding noise into their aggregation, we only consider DP optimizers in this paper without loss of generality. In specific, we consider a DP optimizer, which can be understood as the weight updates $\varphi$ are not excessively influenced by any of the local gradients $(\bigtriangledown_{\varphi} \theta^i_t)^\top \bigtriangleup \theta^i_t$ where $i \in S_t$. By clipping every $\varphi$'s gradients under a pre-defined $l_2$-norm $\mu$, we can bound the influence of a single client's gradient $(\bigtriangledown_{\varphi} \theta^i_t)^\top \bigtriangleup \theta^i_t$ to the model weights $\varphi$. To make the gradients indistinguishable, we add Gaussian noise into the $\varphi$'s gradient aggregation: $\varphi = \varphi - \frac{\lambda}{|S_t|} \big[\sum_{i=1}^{|S_t|} (\bigtriangledown_{\varphi} \theta^i_t)^\top \bigtriangleup \theta^i_t + \mathcal{N}(0, \sigma^2\mu^2\mathbf{I})\big]$, where $\sigma$ is a predefined noise scale. The pseudo-code of our approach is in Alg.~\ref{Client-level DP Optimizer}. We utilize this client-level DP optimizer as an effective defense against \textsc{HNTroj}. As in \cite{hong2020effectiveness}, we focus on how parameter configurations of the client-level DP optimizer defend against \textsc{HNTroj} with minimal utility loss, regardless of the privacy provided.

\paragraph{$\alpha$-Trimmed Norm} In addition, among robust aggregation approaches against byzantine attacks, median-based approaches, $\alpha$-trimmed mean, and variants of these techniques \cite{pillutla2019robust,guerraoui2018hidden} can be adapted to HyperNetFL against \textsc{HNTroj} by applying them on the gradients of $\varphi$. Without loss of generality, we adapt the well-applied $\alpha$-trimmed mean approach \cite{yin2018byzantine} into HyperNetFL to eliminate potentially malicious $\varphi$'s gradients in this paper. The adapted algorithm needs to be less computational resource hungry in order for it to efficiently work with the large size of $\varphi$. Therefore, instead of looking into each element of the $\varphi$'s gradient as in $\alpha$-trimmed mean, we trim the top $\frac{\alpha}{2}$\% and the bottom $\frac{\alpha}{2}$\% of the gradients $\{(\bigtriangledown_{\varphi} \theta^i_t)^\top \bigtriangleup \theta^i_t\}_{i \in S_t}$ that respectively have the highest and lowest magnitudes quantified by an $l_2$-norm, i.e., $\|(\bigtriangledown_{\varphi} \theta^i_t)^\top \bigtriangleup \theta^i_t\|_2$. The remaining gradients after the trimming, denoted $\{(\bigtriangledown_{\varphi} \theta^i_t)^\top \bigtriangleup \theta^i_t\}_{i \in S^{\alpha-trim}_t}$, are used to update the HyperNetFL model weights $\varphi$, i.e., $\varphi = \varphi - \frac{\lambda}{|S^{\alpha-trim}_t|} \sum_{i=1}^{|S^{\alpha-trim}_t|} (\bigtriangledown_{\varphi} \theta^i_t)^\top \bigtriangleup \theta^i_t$. The descriptors $\{v_i\}_{i \in S^{\alpha-trim}_t}$ are updated normally. The pseudo-code of the $\alpha$-trimmed norm for HyperNetFL is in Alg.~\ref{alpha-Trimmed Norm}.

\section{Experiments}\label{app:Experimental Settings}
\textbf{Data and Model Configuration.} To achieve our goal, we conduct an extensive experiment on CIFAR-10 \cite{krizhevsky2009learning} and Fashion MNIST datasets \cite{xiao2017fashion}. For both datasets, to generate non-iid data distribution across clients in terms of classes and size of local training data, we randomly sample two classes for each client. And for each client $i$ and selected class $c$, we sample $p_{i,c} \sim \mathcal{N}(0,1)$ and assign it with $\frac{p_{i,c}}{ \sum_{j} p_{j,c}}$ of the samples for this class. We use $100$ clients. There are $60,000$  samples in the CIFAR-10 and $70,000$  samples in the Fashion MNIST datasets. Each dataset is divided into three non-overlapping sets: $10,000$  samples for testing, $10,000$  samples for training the Trojaned model $X$, and the rest for training (i.e., $40,000$ samples in the CIFAR-10 and $50,000$ samples in the Fashion MNIST for training). We use the class $0$ as a targeted class $y^b_j$ (Eq. \ref{backdoor goal}) in each dataset. 
%The distributions of classes and size of local training data are demonstrated in Figs.~\ref{histogram-client} and \ref{histogram}. 

We adopt the model configuration described in \cite{shamsian2021personalized} for both datasets, in which we use a LeNet-based network \cite{lecun1998gradient} with two convolution and two fully connected layers for the local model and a fully-connected network with three hidden layers and multiple linear heads
per target weight tensor for the HyperNetFL. SGD optimizer with the learning rate $0.01$ for the HyperNetFL and $0.001$ for the local model are used. 

We use WaNet \cite{nguyen2021wanet}, which is one of the state-of-the-art backdoor attacks, for generating backdoor data. WaNet uses image warping-based triggers making the modification in the backdoor images natural and unnoticeable to humans. We follow the learning setup described in \cite{nguyen2021wanet} to generate the backdoor images that are used to train $X$. 
In the DP optimizer, we vary the noise scale  $\sigma \in \{10^{-1}, 10^{-2}, 10^{-3}, 10^{-4}\}$ and the clipping $l_2$-norm $\mu \in \{ 8,4,2,1\}$. For the $\alpha$-trimmed norm approach, we choose $\alpha \in \{0.1, 0.2, 0.3, 0.4\}$.

\textbf{Evaluation Approach.} We carry out the validation through three approaches. We first compare \textsc{HNTroj} with \textsc{DPois} and \textsc{HNRepl} in terms of legitimate accuracy (ACC) on legitimate data samples and backdoor successful rate (SR) on Trojaned data samples with a wide range number of compromised clients. The second approach is to investigate the effectiveness of adapted robust HyperNetFL training algorithms, including the client-level DP optimizer and the $\alpha$-trimmed norm, under a variety of hyper-parameter settings against \textsc{HNTroj}. Based upon that, the third approach provides a performance summary of both attacks and defenses to inform the surface of backdoor risks in HyperNetFL. The (average) legitimate ACC and backdoor SR across clients on testing data are as follows:

\begin{align}
& \text{Legitimate~ACC} = \frac{1}{N} \sum_{i \in [N]} \frac{1}{n_i^\tau} \sum_{j \in [n_i^\tau]} Acc \big(f(x_j^i,\theta^i), y_j^i \big) \nonumber \\
& \text{Backdoor~SR} =  \frac{1}{N} \sum_{i \in [N]} \frac{1}{n_i^\tau} \sum_{j \in [n_j^\tau]} Acc \big(f(\overline{x}_j^i, \theta^i), y_j^{i,b} \big)  \nonumber
\end{align}
where $\overline{x}^i_j = x^i_j + \mathcal{T}$ is a Trojaned sample, $Acc (y', y) = 1$ if $y' = y$; otherwise $Acc (y', y) = 0$ and $n_i^\tau$ is the number of testing samples in client $i$.

 \begin{figure*}[ht] %\vspace{-10pt}
 \centering
\subfloat[\#Compromised clients $\complement = 1$]{\label{DP-opt-1}\includegraphics[scale=0.3]{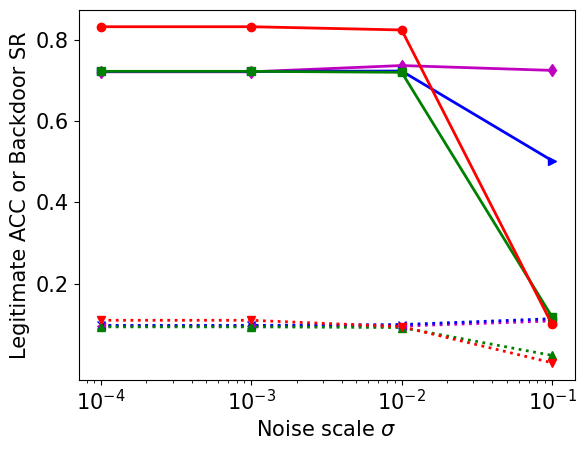}}\hfill
\subfloat[\#Compromised clients $\complement = 10$]{\label{DP-opt-10}\includegraphics[scale=0.3]{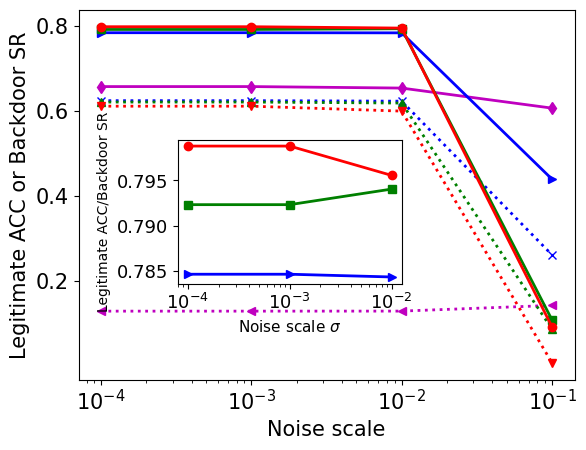}}\hfill
\subfloat[\#Compromised clients $\complement = 20$]{\label{DP-opt-20}\includegraphics[scale=0.3]{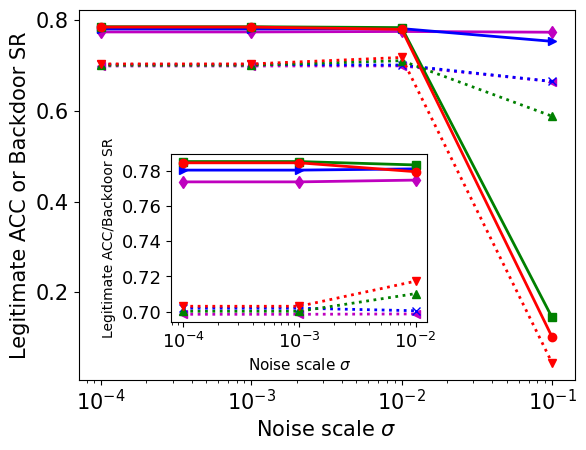}}\hfill
\subfloat[Legends for (a)-(e)]{\label{DP-opt-label}\includegraphics[scale=0.33]{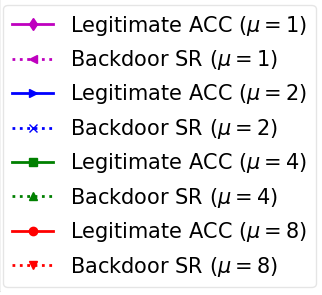}}\par
\caption{\textsc{HNTroj} under client-level DP optimizer-based robust HyperNetFL training in the CIFAR-10 dataset. }  %\vspace{-15pt}
\label{DP-opt CIFAR-10 - main}
\end{figure*}

 \begin{figure*}[ht] %\vspace{-10pt}
 \centering
\subfloat[\#Compromised clients $\complement = 1$]{\label{DP-opt-1fmnist}\includegraphics[scale=0.3]{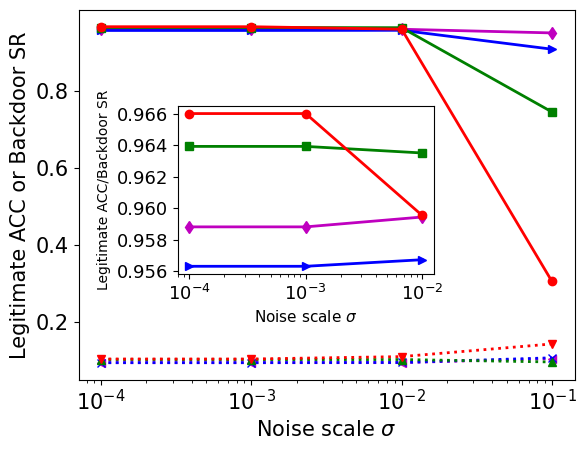}}\hfill
\subfloat[\#Compromised clients $\complement = 10$]{\label{DP-opt-10fmnist}\includegraphics[scale=0.3]{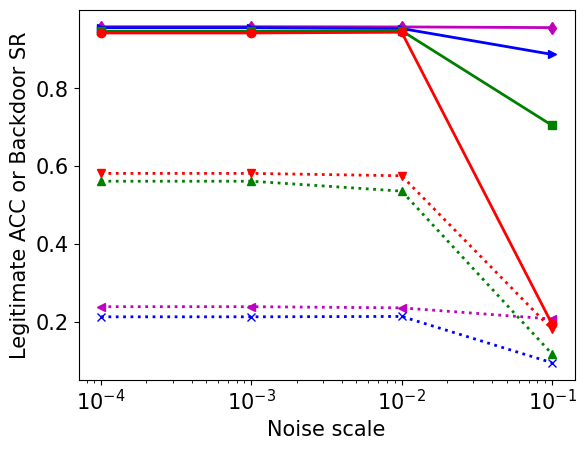}}\hfill
\subfloat[\#Compromised clients $\complement = 20$]{\label{DP-opt-20fmnist}\includegraphics[scale=0.3]{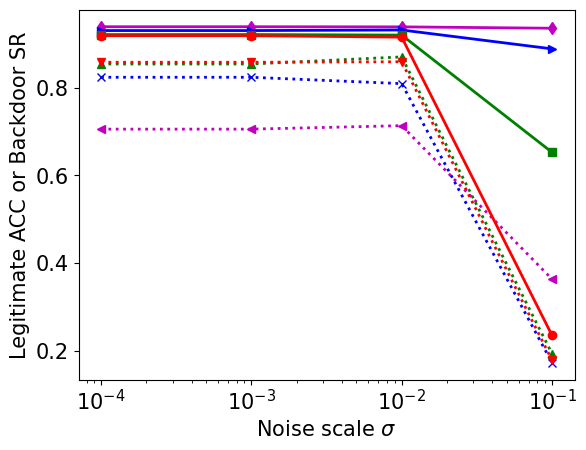}}\hfill
\subfloat[Legends for (a)-(e)]{\label{DP-opt-labelfmnist}\includegraphics[scale=0.33]{images/DP-opt-label.png}}\par
\caption{\textsc{HNTroj} under client-level DP optimizer-based robust HyperNetFL training in the Fashion MNIST dataset.  } %\vspace{-15pt}
\label{DP-opt fmnist - main}
\end{figure*}

 \begin{figure}[t]
      \centering
       \includegraphics[scale=0.3]{} 
      \caption{Legitimate ACC and backdoor SR under $\alpha$-trimmed norm defense in the Fashion MNIST  dataset.} 
      \label{outlier_removel-fmnist}
\end{figure}

\begin{figure}[t]
  \centering %\vspace{-10pt}
\subfloat[CIFAR-10]{\label{White-box model-dp}\includegraphics[scale=0.24]{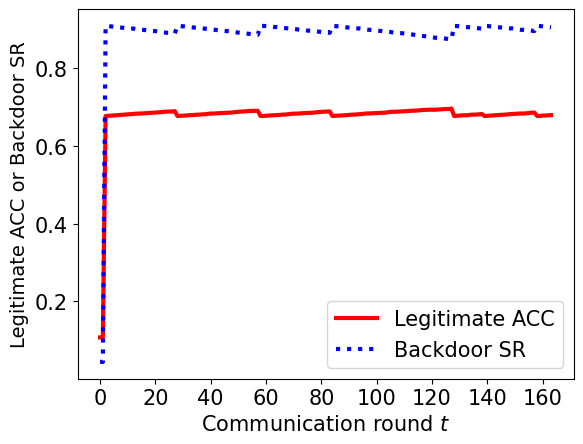}}\hspace{2cm}
\subfloat[Fashion MNIST]{\label{White-box model-trim}\includegraphics[scale=0.24]{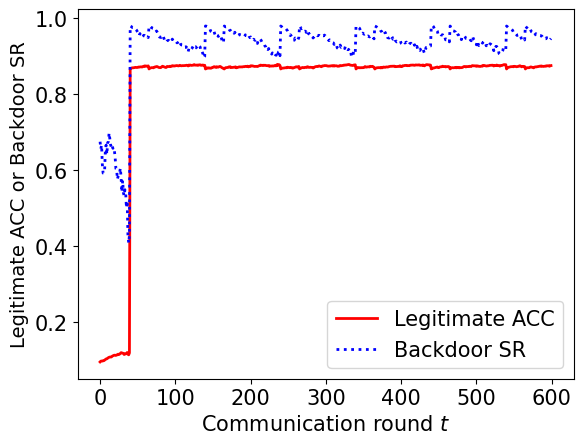}} 
      \caption{White-box model replacement attack in the CIFAR-10 and the Fashion MNIST datasets.}  %\vspace{-10pt}}
      \label{White-box model}
\end{figure}

\begin{algorithm}[t]
\caption{$\alpha$-Trimmed Norm in HyperNetFL}\label{alpha-Trimmed Norm}
\KwIn{Number of rounds $T$, number of local rounds $K$, server's learning rates $\lambda$ and $\zeta$, clients' learning rate $\eta$, number of clients $N$, and $L_i (B)$ is the loss function $L_i(\theta)$ on a mini-batch $B$}
\KwOut{ ${\varphi}, v_i$}
\begin{algorithmic}[1]
 \FOR{$t = 1, \ldots, T$}
 \STATE Sample clients $S_t$
 \FOR{each client $i \in S_t$}
%  \STATE sample client $i \in [n]$
 \STATE set $\theta^i_t = h(v_i, \varphi)$ and $\tilde{\theta}^i = \theta^i_t$
    \FOR{$k = 1, \ldots, K$}
    \STATE sample mini-batch $B \subset D_i$
    \STATE $\tilde{\theta}_{k+1}^i = \tilde{\theta}_k^i - \eta \bigtriangledown_{\tilde{\theta}_{k}^i}  L_i (B)$
    \ENDFOR
    \STATE $\bigtriangleup \theta^i_t = \tilde{\theta}^i_K -\theta^i_t$
\ENDFOR
{\color{blue}
    \STATE $S^{\alpha-trim}_t = $ $\alpha$-trimmed norm $\big(\{(\bigtriangledown_{\varphi} \theta^i_t)^\top \bigtriangleup \theta^i_t\}_{i \in S_t}\big)$
    \STATE $\varphi = \varphi - \frac{\lambda}{|S^{\alpha-trim}_t|} \sum_{i=1}^{|S^{\alpha-trim}_t|} (\bigtriangledown_{\varphi} \theta^i_t)^\top \bigtriangleup \theta^i_t $
    \STATE $\forall i \in S^{\alpha-trim}_t: v_i = v_i - \zeta \bigtriangledown_{v_i} \varphi^\top (\bigtriangledown_{\varphi} \theta^i_t)^\top \bigtriangleup \theta^i_t$
}
 \ENDFOR
\end{algorithmic} 
\end{algorithm}

 \begin{figure}[t]
  \centering %\vspace{-10pt}
\subfloat[CIFAR-10]{\label{risk_surface-dp}\includegraphics[scale=0.2]{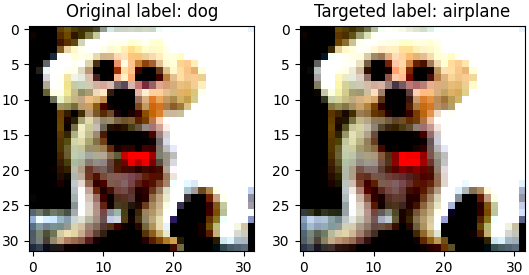}} \hspace{2cm}
\subfloat[Fashion MNIST]{\label{risk_surface-trim}\includegraphics[scale=0.2]{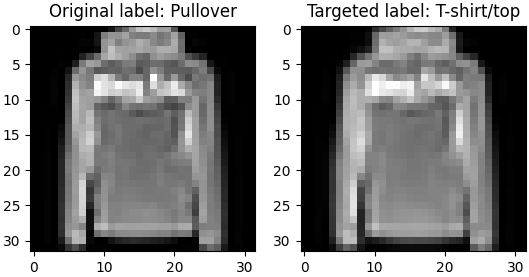}}
      \caption{Legitimate samples (left) and their backdoor samples generated by WaNet \cite{nguyen2021wanet} (right) in \textsc{HNTroj}. 
      %Targeted label is $0$ (T-shirt/top).
     They are are almost identical.}
      \label{visualization-Wanet}
\end{figure}

\begin{figure}[t]
  \centering %\vspace{-10pt}
\subfloat[CIFAR-10]{\label{White-box model-dp}\includegraphics[scale=0.35]{images/MR_random.png}}\hspace{2cm}
\subfloat[Fashion MNIST]{\label{White-box model-trim}\includegraphics[scale=0.35]{images/MR_random_fmnist.png}} 
      \caption{White-box model replacement attack in the CIFAR-10 and the Fashion MNIST datasets.}  %\vspace{-10pt}}
      \label{White-box model}
\end{figure}

\begin{figure*}[htbp] %\vspace{-10pt}
 \centering
\subfloat[\#Compromised clients $\complement = 1$]{\label{Compare_Nodefense-fmnist-main-1}\includegraphics[scale=0.3]{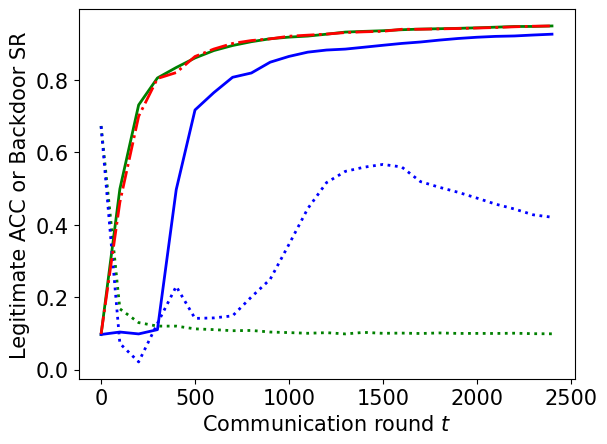}}\hfill
\subfloat[\#Compromised clients $\complement = 10$]{\label{Compare_Nodefense-fmnist-main-10}\includegraphics[scale=0.3]{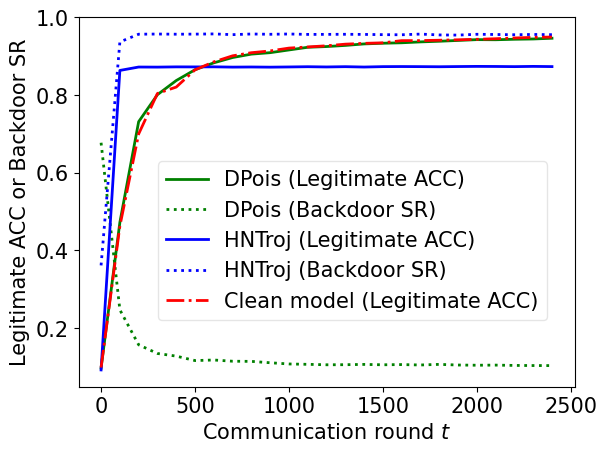}}\hfill
\subfloat[Summary]{\label{Compare_Nodefense-fmnist-main-label}\includegraphics[scale=0.3]{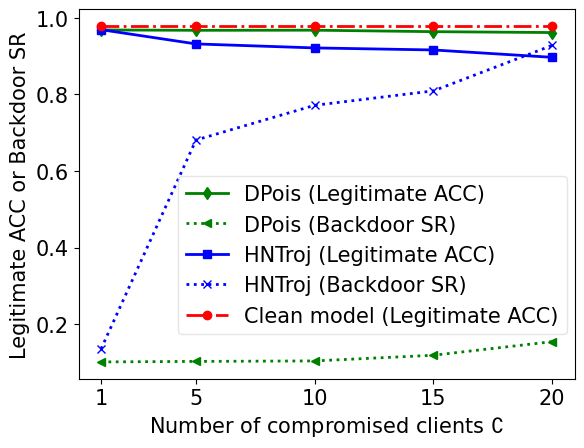}}
\caption{Legitimate ACC and Backdoor SR comparison for
\textsc{DPois}, \textsc{HNTroj}, and Clean model over different numbers of compromised clients in the Fashion MNIST dataset.  
%A complete version  is in Appx.~\ref{Supplemental Experimental Results}. 
(Fig.~\ref{Compare_Nodefense-fmnist-main}a and b have the same legend.) } %\vspace{-15pt}
\label{Compare_Nodefense-fmnist-main}
\end{figure*}

 \begin{figure*}[ht] %\vspace{-10pt}
 \centering
\subfloat[\#Compromised clients $\complement = 1$]{\label{DP-opt-1fmnist}\includegraphics[scale=0.3]{images/DP-opt-1-fmnist.png}}\hfill
\subfloat[\#Compromised clients $\complement = 10$]{\label{DP-opt-10fmnist}\includegraphics[scale=0.3]{images/DP-opt-10-fmnist.png}}\hfill
\subfloat[\#Compromised clients $\complement = 20$]{\label{DP-opt-20fmnist}\includegraphics[scale=0.3]{images/DP-opt-20-fmnist.png}}\hfill
\subfloat[Legends for (a)-(e)]{\label{DP-opt-labelfmnist}\includegraphics[scale=0.33]{images/DP-opt-label.png}}\par
\caption{\textsc{HNTroj} under client-level DP optimizer-based robust HyperNetFL training in the Fashion MNIST dataset.  } %\vspace{-15pt}
\label{DP-opt fmnist - main}
\end{figure*}

 \begin{figure*}[ht] %\vspace{-10pt}
 \centering
\subfloat[\#Compromised clients $\complement = 1$]{\label{DP-opt-1}\includegraphics[scale=0.3]{images/DP-opt-1.png}}\hfill
\subfloat[\#Compromised clients $\complement = 10$]{\label{DP-opt-10}\includegraphics[scale=0.3]{images/DP-opt-10.png}}\hfill
\subfloat[\#Compromised clients $\complement = 20$]{\label{DP-opt-20}\includegraphics[scale=0.3]{images/DP-opt-20.png}}\hfill
\subfloat[Legends for (a)-(e)]{\label{DP-opt-label}\includegraphics[scale=0.33]{images/DP-opt-label.png}}\par
\caption{\textsc{HNTroj} under client-level DP optimizer-based robust HyperNetFL training in the CIFAR-10 dataset. }  %\vspace{-15pt}
\label{DP-opt CIFAR-10 - main}
\end{figure*}

 \textbf{Backdoor Risk Surface: Attacks and Defenses.} 
% As demonstrated in our previous experiments, 
The trade-off between legitimate ACC and backdoor SR is non-trivially observable given many attack and defense configurations. To inform a better surface of backdoor risks, we look into a natural question: \textit{``What can the adversary or the defender achieve given a specific number of compromised clients?"} 

We answer this question by summarizing the best defending performance and the most stealthy and severe backdoor risk across hyper-parameter settings in the same diagram. Given a number of compromised clients $\complement$ and a robust training algorithm $\mathcal{A}$, the best defending performance, which maximizes both the 1) legitimate ACC (i.e., ACC in short) and 2) the gap between the ACC and backdoor SR (i.e., SR in short), is identified across hyper-parameters' space of $\mathcal{A}$, i.e., $\Theta^{\complement}_\mathcal{A}$:
\begin{equation}
\phi^* = \arg\max_{\phi \in \Theta^{\complement}_\mathcal{A}} \big[ACC(\phi) + \big(ACC(\phi) - SR(\phi)\big)\big] \nonumber
\end{equation}
where $ACC(\phi)$ and $SR(\phi)$ are the legitimate ACC and backdoor SR using the specific hyper-parameter configuration $\phi$. 
Similarly, we identify the most stealthy and severe backdoor risk maximizing both the legitimate ACC and backdoor SR:
\begin{equation}
\phi^* = \arg\max_{\phi \in \Theta^{\complement}_\mathcal{A}} \big[ACC(\phi) + SR(\phi)\big] \nonumber
\end{equation}

Fig.~\ref{risk surface} summaries the best performance of both defenses and attacks in the CIFAR-10 dataset as a function of the number of compromised clients through out the hyper-parameter space. For instance, given the number of compromised clients $\complement = 10$, using the client-level DP optimizer, the best defense can reduce the backdoor SR to $12.88\%$ with a cost of $13.45\%$ drop in the legitimate ACC. Meanwhile, in a weak defense using the client-level DP optimizer, the adversary can increase the backdoor SR up to $71.74\%$ without sacrificing much the legitimate ACC. From Figs.~\ref{risk surface}a-b, we can observe that $\alpha$-trimmed norm is a little bit more effective than the client-level DP optimizer by having a wider gap between legitimate ACC and backdoor SR. From the adversary angle, to ensure the success of the \textsc{HNTroj} regardless of the defenses, the adversary needs to have at least $15$ compromised clients.

\textbf{\textsc{HNTroj} v.s. Client-level DP Optimizer.} We observe a similar phenomenon when we apply the client-level DP optimizer as a defense against \textsc{HNTroj} (Fig.~\ref{DP-opt CIFAR-10 - main}). First, by using small noise scales $\sigma \in [10^{-4}, 10^{-2}]$, the client-level DP optimizer is effective in defending against \textsc{HNTroj} when the number of compromised clients is small ($\complement \in [1, 5]$) achieving low backdoor SR, i.e.,  $31.97\%$ in average, while maintaining an acceptable legitimate ACC, i.e., $72.33\%$ in average. When the number of compromised clients is a little bit larger, the defense pays notably large tolls on the legitimate ACC (i.e., the legitimate ACC drops from $75.80\%$ to $31.18\%$ given $\complement = 10$) or fails to reduce the backdoor SR (i.e., backdoor SR $> 58.25\%$ given $\complement > 10$). %\footnote{The best defense settings are $(\mu = 1, \complement = 15)$ and $(\mu = 1, \complement = 20)$.}). 
That is consistent with our analysis. A small sufficient number of compromised clients synergistically and consistently can pull the outputs of the HyperNetFL to the Trojaned model $X$'s surrounded area.

\newpage
\clearpage
  
 \begin{figure*}[t]
  \centering
\subfloat[Client-level DP optimizer]{\label{risk_surface-dp}\includegraphics[scale=0.35]{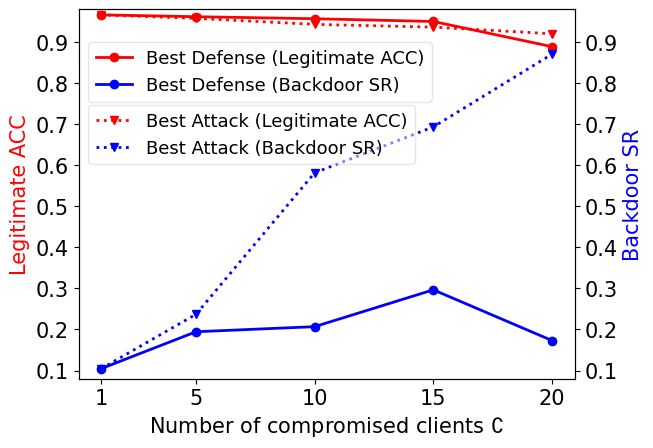}} \hspace{2cm}
\subfloat[$\alpha$-Trimmed Norm]{\label{risk_surface-trim}\includegraphics[scale=0.35]{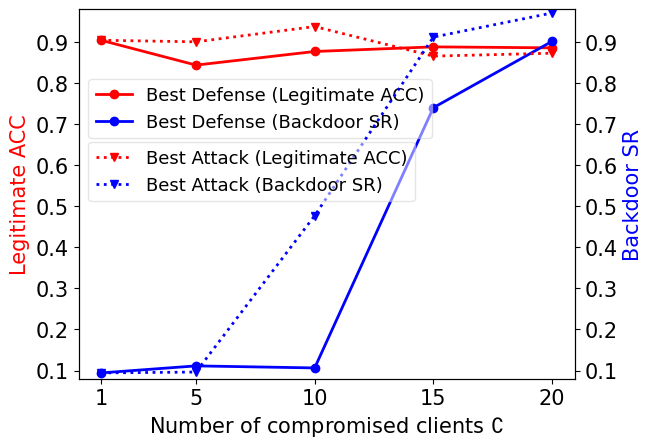}}
      \caption{Backdoor risk surface: attacks and defenses in the Fashion MNIST dataset. The attack we used here is \textsc{HNTroj}.}
      \label{risk surface fmnist}
\end{figure*}

\begin{figure*}[t]
  \centering
\subfloat[Client-level DP optimizer]{\label{risk_surface-dp}\includegraphics[scale=0.35]{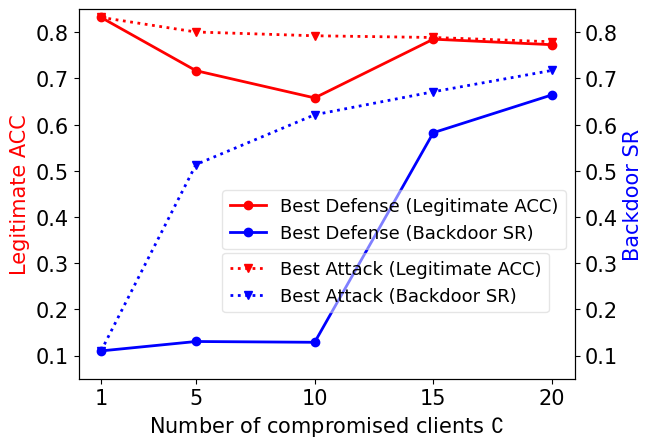}}\hspace{2cm}
\subfloat[$\alpha$-Trimmed Norm]{\label{risk_surface-trim}\includegraphics[scale=0.35]{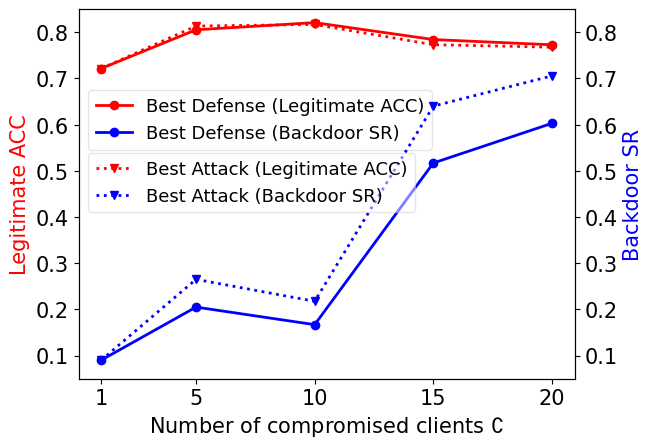}} 
      \caption{Backdoor risk surface: attacks and defenses in the CIFAR-10 dataset. The attack we used here is \textsc{HNTroj}.}  %\vspace{-10pt}}
      \label{risk surface-appx}
\end{figure*}
%%%%%%%%%%%%%%%%%%%%%%%%%%%%%%%%%%%%%%%%%%%%%%%%%%%%%%%%%%%%

\end{document}